\def\eqref#1{equation~\ref{#1}}
\def\1{\bm{1}}
\DeclareMathAlphabet{\mathsfit}{\encodingdefault}{\sfdefault}{m}{sl}
\SetMathAlphabet{\mathsfit}{bold}{\encodingdefault}{\sfdefault}{bx}{n}
\newtheorem{prop}{Proposition}
\newtheorem*{prop*}{Proposition}
\newcommand{\fwname}{BAM}
\newcommand{\frameworkname}{Bayes 
with Adaptive Memory}
\newcommand{\buffer}{\mathcal{D}_{<t}}
\title{\fwname{}: \frameworkname{}}
\author{Josue Nassar$^\ast$\\
  Department of Electrical and Computer Engineering\\
  Stony Brook University\\
  \texttt{josue.nassar@stonybrook.edu}
  \And
  Jennifer Brennan\\
  Department of Computer Science\\
  University of Washington\\
  \texttt{jrb@cs.washington.edu}
  \And
  Ben Evans \\
  Department of Computer Science\\
  New York University\\
  \texttt{benevans@nyu.edu}
  \And
  Kendall Lowrey\thanks{corresponding authors}\\
  Department of Computer Science\\
  University of Washington\\
  \texttt{klowrey@cs.washington.edu}
}
\begin{document}

\maketitle 
\begin{abstract}
Online learning via Bayes' theorem allows new data to be continuously integrated into an agent's current beliefs.
However, a naive application of Bayesian methods in non-stationary environments leads to slow adaptation and results in state estimates that may converge confidently to the wrong parameter value. 
A common solution when learning in changing environments is to discard/downweight past data; however, this simple mechanism of ``forgetting'' fails to account for the fact that many real-world environments involve revisiting similar states.
We propose a new framework,
\frameworkname{} (\fwname{}), that takes advantage of past experience 
by allowing the agent to choose which past observations to remember and which to forget.
We demonstrate that \fwname{}
generalizes many popular Bayesian update rules for non-stationary environments. 
Through a variety of experiments, we demonstrate the ability of \fwname{} to continuously adapt in an ever-changing world.
\end{abstract}

\section{Introduction}
The ability of an agent to continuously modulate its belief while interacting with a non-stationary environment is a hallmark of intelligence and has garnered a lot of attention in recent years~\citep{sidetuning2019,ebrahimi2020adversarial, DBLP:journals/corr/abs-2006-10701}.
The Bayesian framework enables online learning by providing a principled way to incorporate new observations into an agent's model of the world~\citep{jaynes2003probability,gelman2013bayesian}.
Through the use of Bayes' theorem, the agent can combine its own (subjective) \emph{a priori} knowledge with data to achieve an updated belief encoded by the posterior distribution.

The Bayesian framework is a particularly appealing option for online learning because Bayes' theorem is closed under recursion, enabling continuous updates in what is commonly referred to as the recursive Bayes method~\citep{wakefield2013bayesian}.
As an example, suppose the agent first observes a batch of data, $\mathcal{D}_1$, and then later observes another batch of data, $\mathcal{D}_2$.
We can express the agent's posterior distribution over the world, where the world is represented by $\theta$, as
\begin{align}\label{eq:recursive_bayes}
    p(\theta \vert \mathcal{D}_1, \mathcal{D}_2) = \frac{p(D_2 \vert \theta) p(\theta \vert \mathcal{D}_1)}{p(\mathcal{D}_2 \vert \mathcal{D}_1)},
\end{align}
where
\begin{align}
    p(\mathcal{D}_2 \vert \mathcal{D}_1) = \int p(\mathcal{D}_2 \vert \theta) p(\theta \vert \mathcal{D}_1) d\theta .
\end{align}
Equation~\ref{eq:recursive_bayes}
demonstrates the elegance and simplicity of recursive Bayes: at time $t$, the agent recycles its previous posterior, $p(\theta \vert \mathcal{D}_{<t})$, where $\mathcal{D}_{<t} = \{\mathcal{D}_1, \cdots, \mathcal{D}_{t-1}\}$, into its current prior and then combines it with a newly observed batch of data, $\mathcal{D}_t$, to obtain an updated posterior, $p(\theta \vert \mathcal{D}_{\leq t})$.

At first glance, it would appear that a naive application of recursive Bayes would suffice for most online learning tasks.
However, the recursive Bayes method relies on the assumption that the world is stationary, i.e. $\mathcal{D}_1, \mathcal{D}_2, \cdots$ are all independent and identically distributed.
When this assumption is violated, recursive Bayes can fail catastrophically.
As an illustration, consider the law of total variance:
\begin{equation}\label{eq:law_total_variance}
    \textrm{Var}(\theta \vert \mathcal{D}_{<t}) = \mathbb{E}[\textrm{Var}(\theta \vert \mathcal{D}_{<t}, \mathcal{D}_{t}) \big\vert \mathcal{D}_{<t}] + \textrm{Var}(\mathbb{E}[\theta \vert \mathcal{D}_{<t}, \mathcal{D}_{t}] \big\vert \mathcal{D}_{<t}).
\end{equation}
Since both terms on the right hand side are positive,~\eqref{eq:law_total_variance} reveals that in expectation, the variance of the posterior decreases as more data is seen \textbf{regardless of the actual distribution of $\mathcal{D}_t$}, i.e.
\begin{equation}\label{eq:decreasing_posterior_variance}
    \textrm{Var}(\theta \vert \mathcal{D}_{<t}) \geq \mathbb{E}[\textrm{Var}(\theta \vert \mathcal{D}_{<t}, \mathcal{D}_{t}) \big\vert \mathcal{D}_{<t}].
\end{equation}
In fact, for some models~\eqref{eq:decreasing_posterior_variance} is true with probability 1; we demonstrate examples in Appendix~\ref{apx:decreasing_posteriors}.
Thus, if the parameters of the environment, $\theta$, were to change, the variance of the posterior would still decrease, becoming more certain of a potentially obsolete parameter estimate.
Modeling the environment as stationary when it is actually changing also keeps the learning speed of the agent artificially low, as tighter posteriors prevent large jumps in learning.
This is the opposite of what an intelligent agent should do in such an event:
if the environment changes, we would expect the agent's uncertainty and learning speed to increase in response.

As was elegantly stated by~\citet{monton2002sleeping}, the problem with naive use of recursive Bayes is that~"Such a Bayesian never forgets."
Previous approaches on enabling recursive Bayes to work in non-stationary settings have primarily focused on \textit{forgetting} past experience either through the use of changepoint detection~\citep{adams2007bayesian,lidetecting2021}, or by exponentially weighting past experiences
\citep{moens2018hierarchical, moens2019learning,
masegosa2020variational}.
While empirically successful, their focus on forgetting the past means that revisited states are treated as novel.
In this work we take an alternative approach to online Bayesian learning in non-stationary environments by endowing an agent with an explicit memory module.
Crucially, the addition of a memory buffer equips the agent with the ability to modulate its uncertainty by choosing what past experiences to both forget and remember.
We call our approach \frameworkname~(\fwname) and demonstrate its wide applicability and effectiveness on a number of non-stationary learning tasks.

\section{\frameworkname}
\vspace{-1em}
The generative model is assumed to evolve according to
\begin{gather}
\label{eq:latent_dynamics}
    \theta_t \sim p(\theta_t \vert \theta_{t-1}, t), \\
\label{eq:time_varying_likelihood}
    \mathcal{D}_t \sim p_t(\mathcal{D}) \equiv p(\mathcal{D} \vert \theta_t),
\end{gather}
where~\eqref{eq:latent_dynamics} is the latent dynamics that dictate the evolution of the environment parameters, $\theta_t$, and~\eqref{eq:time_varying_likelihood} is the likelihood whose parametric form is fixed throughout time, i.e. $p_t(\mathcal{D}) = \mathcal{N}(\theta_t, \sigma^2)$.
Equations~\ref{eq:latent_dynamics}~and~\ref{eq:time_varying_likelihood} 
define a state-space model, which allows one to infer $\theta_t$ through Bayesian filtering~\citep{sarkka2013bayesian}
\begin{gather}\label{eq:bayesian_filtering}
    p(\theta_t \vert \mathcal{D}_{\leq t}) \propto p(\mathcal{D}_t \vert \theta_t) p(\theta_t \vert \mathcal{D}_{<t}), \\
    \label{eq:bayes_filter_prior}
    p(\theta_t \vert \mathcal{D}_{<t}) = \int p(\theta_t \vert \theta_{t-1}, t) p(\theta_{t-1} \vert \mathcal{D}_{<t}) d\theta_{t-1}.
\end{gather}
The parameterization of
equations~\ref{eq:latent_dynamics}~and~\ref{eq:time_varying_likelihood} dictate the tractability of equations~\ref{eq:bayesian_filtering}~and~\ref{eq:bayes_filter_prior}.
If a priori an agent knew that~\eqref{eq:latent_dynamics} is a linear dynamical system with additive white Gaussian noise and~\eqref{eq:time_varying_likelihood} is also Gaussian whose conditional mean is a linear function of $\theta_t$, then the Kalman filter can be used~\citep{kalman1960}.
For more complicated latent dynamics and/or likelihood models, methods such as particle filtering~\citep{doucet2009tutorial} and unscented Kalman~filtering~\citep{julier1997new} can be used.
Crucially, Bayesian filtering methods assume that the latent dynamics governed by~\eqref{eq:latent_dynamics} are known; however, this is rarely the case in practice.
Instead of making assumptions on the parametric form of~\eqref{eq:latent_dynamics}, we take a different approach.

In \fwname{}, the agent maintains a memory buffer,~$\buffer$, that stores previous observations of the environment.
At time $t$ the agent obtains a new batch of data, $\mathcal{D}_t \sim p_t(\mathcal{D})$.
How should the agent combine the newly observed data, $\mathcal{D}_t$, with its stored memory, $\buffer$, to update its belief as encoded by the posterior distribution?

In recursive Bayes, the posterior distribution is computed according to\footnote{Recursive Bayes is equivalent to Bayesian filtering when $p(\theta_t \vert \theta_{t-1}, t) = \delta(\theta_t = \theta_{t-1})$.}
\begin{gather}
    p(\theta_t \vert \mathcal{D}_t, \buffer) \propto p(\mathcal{D}_t \vert \theta_t) p(\theta_t \vert \buffer), \\
    \label{eq:recursive_bayes_dynamic_prior}
    p(\theta_t \vert \buffer) \propto p(\theta_t) \prod_{j=1}^{t-1} p(\mathcal{D}_j \vert \theta_t),
\end{gather}
where we refer to $p(\theta_t)$ as the base prior.
Equation~\ref{eq:recursive_bayes_dynamic_prior} allows us to interpret recursive Bayes as the agent constructing a dynamic prior, $p(\theta_t \vert \buffer)$, using all the experiences stored in its memory buffer. 
This works under the stationarity assumption; when this assumption is violated, the application of Bayes' theorem can lead to \textbf{confidently wrong results} as the "distance" between $p_i(\mathcal{D})$ and $p_j(\mathcal{D})$ can be vast.
An alternative is for the agent to completely forget all of its past experiences
\begin{equation}~\label{eq:forgetful_bayesian}
    p(\theta_t \vert \mathcal{D}_t) \propto p(\mathcal{D}_t \vert \theta_t) p(\theta_t).
\end{equation}
While~\eqref{eq:forgetful_bayesian} may be viable in situations where $\mathcal{D}_t$ is sufficiently informative, it is wasteful when experiences in the memory buffer may help infer $\theta_t$.

\fwname{} dynamically finds a middle ground between these two extremes of remembering~(\eqref{eq:recursive_bayes_dynamic_prior}) and forgetting~(\eqref{eq:forgetful_bayesian}) everything by allowing the agent to choose which data to use from its memory buffer to construct the prior.
Specifically, the agent is endowed with 
a time-dependent readout weight, $W_t = [w_{t, 1}, w_{t, 2}, \cdots, w_{t, t-1}]$ where $w_{t, j} \in [0, 1]$.
Given a new datum $\mathcal{D}_t$, \fwname{} constructs its posterior according to
\begin{align}\label{eq:bam_generalized}
    p(\theta_t \vert \mathcal{D}_t, \buffer, W_t) \propto p(\theta_t) p(\mathcal{D}_t \vert \theta_t) \prod_{j=1}^{t-1} p(\mathcal{D}_j \vert \theta_t)^{w_{t, j}}.
\end{align}
We can rewrite~\eqref{eq:bam_generalized} as
\begin{equation}\label{eq:bam_recursive_bayes}
    p(\theta_t \vert \mathcal{D}_t, \buffer, W_t) = \frac{p(\mathcal{D}_t \vert \theta_t) p(\theta_t \vert \buffer, W_t)}{p(\mathcal{D}_t \vert \buffer, W_t)},
\end{equation}
where 
\begin{equation}\label{eq:bam_recursive_prior}
    p(\theta_t \vert \buffer, W_t) \propto p(\theta_t) \prod_{j=1}^{t-1} p(\mathcal{D}_j \vert \theta_t)^{w_{t, j}},
\end{equation}
and
\begin{equation}\label{eq:bam_recursive_marg_likelihood}
    p(\mathcal{D}_t \vert \buffer, W_t) = \int p(\mathcal{D}_t \vert \theta_t) p(\theta_t \vert \buffer, W_t) d\theta_t.
\end{equation}
The prior construction in~\eqref{eq:bam_recursive_prior} is akin to recursive Bayes, but now the agent can dynamically
and adaptively change its prior by using the readout weights, $W_t$, to weigh the importance of previous experience where at the extreme, it can choose to completely forget a previous experience, $w_{t, j} = 0$, or fully remember it, $w_{t, j} = 1$.
For simplicity, we restrict the readout weights to be binary, i.e. $w_{t, j} \in \{0, 1\}$.

The combination of a memory buffer, $\buffer$, with a time-dependent readout weight, $W_t$, allows \fwname{} to generalize many previously proposed approaches.
By setting $w_{t, 1} = w_{t, 2} = \cdots=w_{t, t-1} = 1$, we recover recursive Bayes~(\eqref{eq:recursive_bayes_dynamic_prior}).
By setting $w_{t, 1} = w_{t, 2} = \cdots=w_{t, t-1} = \alpha$, where $0 \leq \alpha \leq 1$ we recover the power priors approach of~\cite{ibrahim2015power}.
By setting $w_{t, j} = \alpha^{t - 1 - j}$, where $0 \leq \alpha \leq 1$, we recover exponential forgetting~\citep{moens2018hierarchical, moens2019learning,
masegosa2020variational}.
Lastly, by setting a particular subset of the readout weights to be 0, we recover Bayesian unlearning~\citep{nguyen2020}.

The ability to adaptively change its prior implies that BAM can increase/decrease its uncertainty as the situation demands; subsequently, this modulates the agent's learning speed. 
Using variance as a proxy for uncertainty, one would expect that the variance of the prior used in \fwname{}~(\eqref{eq:bam_recursive_prior}) is always at least as large as the variance of the prior used in recursive Bayes~(\eqref{eq:recursive_bayes_dynamic_prior}).
We formalize this for the case of binary readout weights in the following proposition.
\begin{prop}\label{prop:bam_larger_variance}
Let $p(\theta \vert \buffer, W_t)$ be the prior used by \fwname{}, defined in~\eqref{eq:bam_recursive_prior}
and let $p(\theta \vert \buffer)$ be the recursive Bayes prior, defined in~\eqref{eq:bam_recursive_bayes}.
Then
\begin{equation}
    \mathbb{E}[\textrm{Var}(\theta \vert \buffer, W_t) \big\vert W_t] \geq \mathbb{E}[\textrm{Var}(\theta \vert \buffer)], \quad \forall W_t \in \{0, 1\}^{t-1}.
\end{equation}
\end{prop}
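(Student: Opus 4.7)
The plan is to reduce the proposition to a single application of the law of total variance, exactly the identity the paper already uses in equation~(\ref{eq:law_total_variance}). First I would fix an arbitrary $W_t \in \{0,1\}^{t-1}$ and let $S = \{j : w_{t,j}=1\} \subseteq \{1,\ldots,t-1\}$ denote the indices of the past batches BAM chooses to remember, with complementary set $\bar S$. Because the readout weights are binary, the BAM prior in~(\ref{eq:bam_recursive_prior}) reduces to an ordinary Bayesian posterior computed on the sub-dataset $\mathcal{D}_S := \{\mathcal{D}_j\}_{j \in S}$, so $p(\theta \mid \buffer, W_t) = p(\theta \mid \mathcal{D}_S)$, while the recursive Bayes prior in~(\ref{eq:recursive_bayes_dynamic_prior}) is $p(\theta \mid \buffer) = p(\theta \mid \mathcal{D}_S, \mathcal{D}_{\bar S})$. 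The two quantities to compare are thus $\E[\textrm{Var}(\theta \mid \mathcal{D}_S)]$ and $\E[\textrm{Var}(\theta \mid \mathcal{D}_S, \mathcal{D}_{\bar S})]$.

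Next, I would apply the law of total variance to $\theta$, treating $\mathcal{D}_{\bar S}$ as the extra conditioning variable given $\mathcal{D}_S$:
\begin{equation*}
\textrm{Var}(\theta \mid \mathcal{D}_S) \;=\; \E\!\left[\textrm{Var}(\theta \mid \mathcal{D}_S,\mathcal{D}_{\bar S}) \,\big|\, \mathcal{D}_S\right] \;+\; \textrm{Var}\!\left(\E[\theta \mid \mathcal{D}_S,\mathcal{D}_{\bar S}] \,\big|\, \mathcal{D}_S\right).
\end{equation*}
Taking the outer expectation over $\mathcal{D}_S$ and invoking the tower property on the first term yields
\begin{equation*}
\E\!\left[\textrm{Var}(\theta \mid \buffer, W_t) \,\big|\, W_t\right] \;=\; \E\!\left[\textrm{Var}(\theta \mid \buffer)\right] \;+\; \E\!\left[\textrm{Var}\!\left(\E[\theta \mid \buffer] \,\big|\, \mathcal{D}_S\right)\right].
\end{equation*}
The final term is an expectation of a variance, hence non-negative, and the claimed inequality follows for every $W_t$.

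Note that the argument nowhere uses stationarity of the $\mathcal{D}_j$, so non-stationarity poses no obstacle: the law of total variance holds for any joint distribution on $(\theta, \mathcal{D}_1,\ldots,\mathcal{D}_{t-1})$. The only bookkeeping point to handle carefully is the outer conditioning on $W_t$: since $\textrm{Var}(\theta \mid \buffer, W_t)$ depends on $\buffer$ only through $\mathcal{D}_S$, the conditional expectation $\E[\,\cdot \mid W_t]$ collapses to an unconditional expectation over $\mathcal{D}_S$, matching the left-hand side that emerges from the total-variance identity. The main conceptual step that restricts the proposition is the binary assumption on $W_t$, which is precisely what lets us identify the BAM prior with a standard posterior on a sub-dataset; extending the bound to continuous weights would require a genuinely different argument (for example, via differentiation of the log-normalizer of the tempered posterior in~(\ref{eq:bam_generalized})).
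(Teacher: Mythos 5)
Your proof is correct and follows essentially the same route as the paper's: identify the binary-weighted BAM prior with the Bayesian posterior on the remembered sub-dataset $\mathcal{D}_S$, apply the law of total variance conditionally on $\mathcal{D}_S$ with $\mathcal{D}_{\bar S}$ as the extra conditioning variable, and take the outer expectation. The only difference is cosmetic — the paper first walks through the all-ones and all-zeros cases before giving this same general argument, whereas you go straight to the general case, which subsumes them.
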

\begin{proof}
Proof is in Appendix~\ref{proof:bam_larger_variance}.
\end{proof}
\subsection{Selection of readout weights via Bayesian model-selection}
While the previous section demonstrated the flexibility of \fwname{}, the question remains: how should the readout weights, $W_t$, be set?
Equation~\ref{eq:bam_recursive_bayes} allows us to view different readout weights as different models.
Through this lens, we can follow the spirit of Bayesian model selection~\citep{gelman2013bayesian} and compute a posterior over the readout weights
\begin{equation}\label{eq:model_selection}
    p(W_t \vert \mathcal{D}_t, \buffer) \propto p(W_t \vert \buffer) p(\mathcal{D}_t \vert W_t, \buffer).
\end{equation}
For practicality, we compute the maximum a posteriori (MAP) estimate of~\eqref{eq:model_selection}~\citep{gelman2013bayesian} and use that as the value of the readout weight
\begin{align}\label{eq:readout_map}
    W_t &= \underset{W \in \{0, 1\}^{t-1}}{\textrm{argmax}} \log p(\mathcal{D}_t \vert W, \buffer) + \log p(W \vert \buffer), \\
    \label{eq:readout_optimization}
    &= \underset{W \in \{0, 1\}^{t-1}}{\textrm{argmax}} \log \int p(\mathcal{D}_t \vert \theta_t)p(\theta_t \vert W, \buffer) d\theta_t  + \log p(W \vert \buffer).
\end{align}
The first term of~\eqref{eq:readout_map} is the log marginal likelihood, 
which measures the likelihood of $\mathcal{D}_t$ being distributed according to the predictive distribution, $p(\mathcal{D} \vert W, \buffer)$ while 
the prior, $\log p(W \vert \buffer)$, acts as a regularizer.
This procedure of constantly updating the readout weights through~\eqref{eq:readout_map} can be interpreted as providing Bayes a feedback mechanism:~\eqref{eq:readout_map} allows the agent to directly measure its ability to fit the observed data using different combination of experiences in its buffer via the readout weight, and then choosing the readout weight that leads to best fit.
In contrast, standard Bayesian inference is an open-loop procedure: data, likelihood and prior are given and a posterior is spat out, irrespective of the fit of the model to the data~\citep{simpson2017}.

Still left is the question of how do we design the prior, $p(W \vert \buffer)$.
In certain scenarios, using an uninformative prior, i.e. $p(W \vert \buffer) \propto 1$, may suffice if the data is very informative and/or the number of data points in $\mathcal{D}_t$ is large.
In scenarios where these conditions are not met, it is important to use an informative prior as it reduces the chance of overfitting.
In general, the design of priors is highly nontrivial~\citep{winkler1967assessment,gelman2013bayesian,simpson2017}.
While there exists many potential options, we use penalized model complexity priors proposed by~\cite{simpson2017} as they are designed to reduce the chance of overfitting.
Following~\citet{simpson2017}, we parameterize the prior as
\begin{equation}\label{eq:penalized_prior}
    p(W \vert \buffer) \propto \exp \left( -\lambda \sqrt{2 \mathbb{D}_{KL}[p(\theta_t \vert W, \buffer) \Vert p(\theta_t)]} \right),
\end{equation}
where $\lambda \in [0, \infty)$ is a hyperparameter that controls the strength of the prior.\footnote{$\lambda=0$ recovers the uninformative prior case, $p(W_t \vert \buffer) \propto 1$.}
Equation~\ref{eq:penalized_prior} encodes our prior belief that we favor values of $W_t$ that produce simpler models, where simplicity is quantified as the Kullback-Leibler divergence between $p(\theta_t \vert W_t, \buffer)$ and the base prior, $p(\theta_t)$.

Plugging~\eqref{eq:penalized_prior}~into~\eqref{eq:readout_map} we get
\begin{equation}\label{eq:alternative_loss_with_regularizer}
    W_t = \underset{W \in \{0, 1\}^{t-1}}{\textrm{argmax}} \log p(\mathcal{D}_t \vert W, \buffer) -\lambda \sqrt{2 \mathbb{D}_{KL}[p(\theta_t \vert W, \buffer) \Vert p(\theta_t)]}.
\end{equation}
In general, solving~\eqref{eq:alternative_loss_with_regularizer} is difficult as the number of possible readout weights is $2^{(t -1)}$, making brute force solutions practically infeasible.
While there exists many approaches for performing discrete optimization, we found that using a simple greedy approach sufficed for our experiments; in the interest of space, we defer discussion regarding this to Appendix~\ref{appx:greedy_discrete}.
\section{Related Works}
A variety of approaches have been proposed for learning in non-stationary environments.
In signal processing, adaptive filtering techniques such as recursive least squares (RLS) and least mean square filtering (LMS) are the de facto approaches for filtering in non-stationary environments~\citep{haykin2008adaptive}.
While empirically successful, RLS and LMS are only applicable for a limited range of models, i.e. linear models.
In contrast, \fwname{} is a general purpose algorithm that can be deployed on a wide variety of models.

If the latent dynamics are known---or assumed to be known---then Bayesian filtering can be employed.
A popular approach is to model the latent dynamics~(\eqref{eq:latent_dynamics}) as an autoregressive process~\citep{kurle2019continual,rimella2020dynamic}.
While this approach has been popular, it is only applicable for models where the parameters are real-valued. 
A seminal work on Bayesian filtering is the Bayesian online changepoint detction (BOCD) algorithm of~\cite{adams2007bayesian}, where the latent dynamics~(\eqref{eq:latent_dynamics}) are modeled to be piece-wise constant.
While BOCD is broadly applicable and has seen empirical success, the caveat is that an agent forgets all previous experience when a change is detected; thus, previously visited states appear novel to the agent and learning must begin from scratch.
An extension to BOCD was proposed by~\cite{lidetecting2021}, where when a change is detected a scaled version of the previous posterior is used as the prior. 
While 
similar in spirit to BAM, we note that the approach proposed in~\cite{lidetecting2021} is designed for Gaussian distributions, while BAM can work with arbitrary distributions.
Moreover, the approach in~\cite{lidetecting2021} can only increase the uncertainty by a \textbf{fixed pre-determined amount} while BAM can adaptively modulate its uncertainty.

Previous works have proposed solutions for making recursive Bayes more suited for use in non-stationary environments through exponential forgetting of past data~\citep{moens2018hierarchical,moens2019learning,masegosa2020variational}.
While these models have also seen empirical success, their focus have been on forgetting past experiences which prevents the agent to leverage past experiences that are relevant.
In \fwname{}, the agent is focused not only on forgetting irrelevant experiences \textbf{but remembering relevant experiences as well}.

The use of readout weights in \fwname{} can be seen as an instance of likelihood tempering, which has been used to perform robust Bayesian inference~\citep{wang2017robust} and to help with approximate Bayesian inference schemes~\citep{neal1996sampling,neal2001annealed,mandt2016variational}.
While previous works focus on the offline case where data has already been collected, \fwname{} focuses on the online case where the agent adaptively tempers the likelihood.

The concept of an external memory buffer has recently been explored in machine learning~\citep{gemici2017generative,wu2018kanerva,marblestone2020product}.
While similar in spirit to \fwname{}, 
most works use a softmax as their readout weight.
As a byproduct, the agent \textbf{must} select an element from the buffer even if it isn't applicable to the task at hand!
\fwname{} has no such restriction, and can ignore all the previous data in the buffer, resetting back to the base prior.
\section{Experiments}
To demonstrate the versatility of \fwname{}, we apply it in a variety of scenarios.
As \fwname{} is a learning paradigm, it can be implemented as a module in a larger framework allowing it to be easily used in settings such as control/reinforcement learning and domain adaptation~\citep{thompson1933likelihood,osband2018randomized,lowrey2018plan,NEURIPS2018_e1021d43}.
\fwname{} requires the ability to construct the posterior, $p(\theta_t \vert \buffer, W_t)$, and evaluate the log marginal likelihood, $\log p(\mathcal{D}_t \vert \buffer, W_t)$.
In general, the posterior and log marginal likelihood are only available analytically for conjugate priors~\citep{gelman2013bayesian}.
While approaches exist for approximating the posterior~\citep{robert2004monte,brooks2011handbook,blei2017variational} and the log marginal likelihood~\citep{robert2004monte,gelman2013bayesian,grosse2015sandwiching}, we restrict ourselves to only use conjugate priors to ensure any benefits of \fwname{} are not due to uncertain effects of approximations.
The use of conjugate priors also allows us to use sufficient statistics to compute posteriors, allowing \fwname{} to scale amicably when the number of data points in a batch is large~\citep{casella2021statistical}.
\subsection{Experiment 1: Inference in a non-stationary environment}\label{sec:non_stationary_experiment}
To evaluate \fwname{} on online inference in a non-stationary environment, we generate data from the following model
\begin{align}
    \theta_t =  a \sin\left(\frac{2 \pi t}{100} \right) + b,\\
    p(\mathcal{D}_t \vert \theta_t) = \textrm{Binomial}(15, \theta_t),
\end{align}
where $a=0.3$ and $b=0.5$ are chosen such that the lower and upper bounds for $\theta_t$ are 0.2 and 0.8, respectively.
We evaluate \fwname{} with no regularization, $\lambda=0$, and with regularization, where $\lambda=0.1$; as the data is discrete, there is a possibility that \fwname{} could overfit, thus a priori we would expect the regularized BAM to perform better.
We compare against recursive Bayes, Bayesian exponential forgetting (BF) and Bayesian online changepoint detection (BOCD).~\footnote{The timescale parameter for BOCD is 
1/100, which is the frequency of the sinusoid.
The weighting term for BF is
0.8.}
\begin{figure}[H]
\centering
\includegraphics[width=0.9\textwidth]{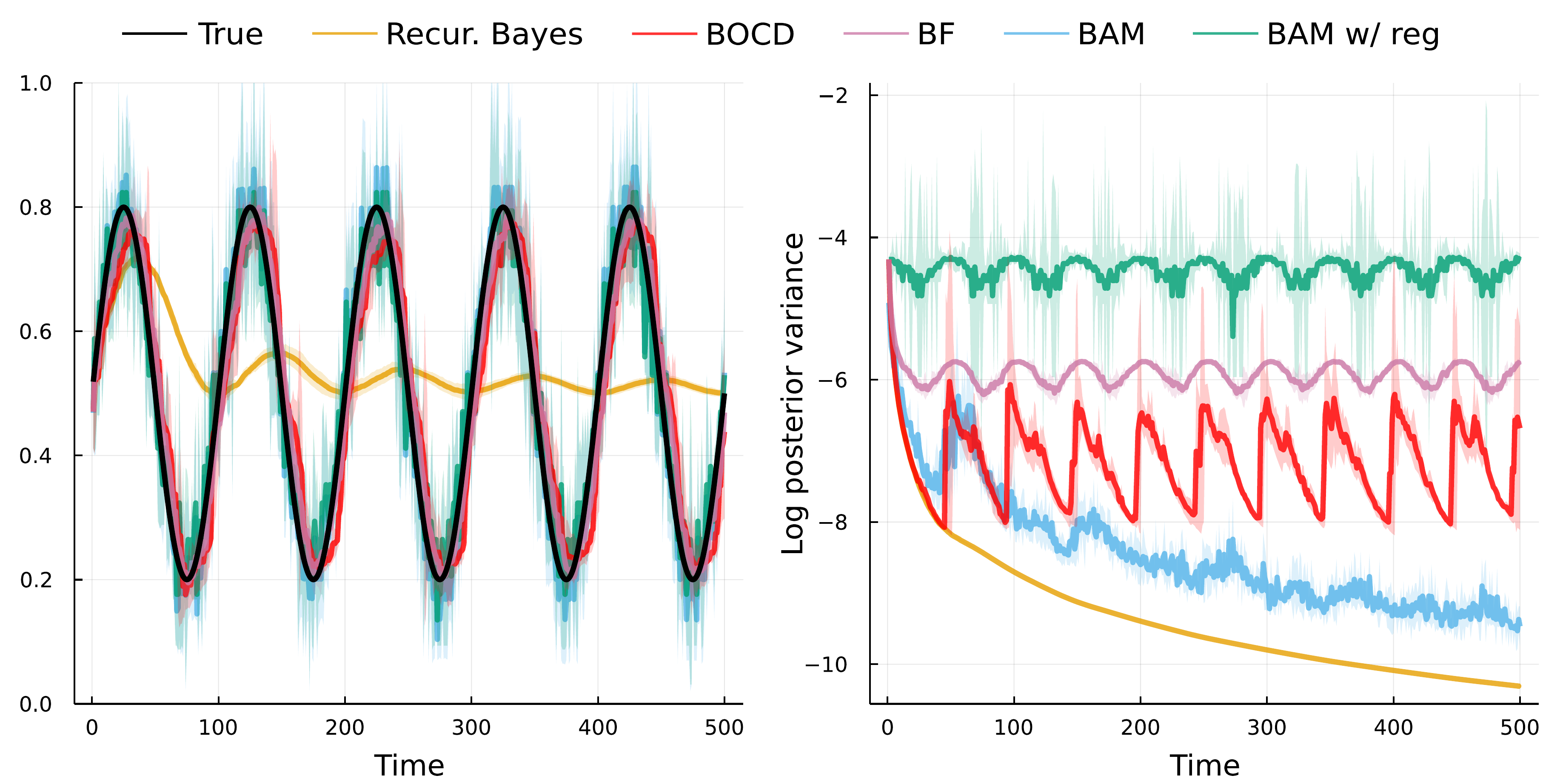}
\caption{
Comparison of recursive Bayes, BOCD, BF, \fwname{} and \fwname{} with regularization on inferring a time-varying parameter.
Results are computed over 20 random seeds.
Solids lines are the median and and error bars represent the 10\% and 90\% quantiles.
\textbf{Left}: Temporal evolution of the posterior mean.
\textbf{Right}: Temporal evolution of the log posterior variance.
}
\label{fig:experiment_one}
\end{figure}

Figure~\ref{fig:experiment_one} demonstrates the weakness of recursive Bayes; as it views more data, the posterior gets more confident. This reduces the learning speed of the agent, preventing it from accurately tracking $\theta_t$, and causing it to converge to the average with an extremely low posterior variance.
BOCD tracks the parameter relatively well, though its estimates are slightly delayed.
As BOCD lacks the ability to recall useful data from the past, its posterior variance resets every time a changepoint is detected.
\fwname{} is able to track $\theta_t$ and doesn't suffer from temporal lag seen in the BOCD results, though the lack of regularization leads to estimates that are not as smooth as BOCD.
The posterior variance of \fwname{} reflects that the agent remembers relevant history and forgets irrelevant history, reducing the jump in posterior variance when revisiting a previously seen state.
Lastly, we can see that \fwname{} with regularization leads to smoother estimates but tends to be less confident compared to the other methods.

\subsection{Experiment 2: Controls}
In this section we illustrate the benefit of memory by applying \fwname{} on a learning task to model non-linear dynamics for controls.
The task is an analytical version of Cartpole~\citep{barto1983neuronlike}, where the goal is to swing-up a pole on a moving cart. 
Non-stationarity is introduced by changing the environment's gravity over time. 
We explore the performance of \fwname{} under two different information models.
In the episodic setting, the agent is told when a change occurs, but not the value of the new gravity parameter. 
In the continual learning setting, the agent is not informed of environmental changes.\footnote{For both settings, the number of data points in a batch is relatively large, leading the log marginal likelihood to overtake the prior in~\eqref{eq:alternative_loss_with_regularizer}. 
As regularization has little effect, results are shown for $\lambda=0$.}

The reward for the task is the cosine of the angle of the pole on the cart, where an angle of $0^\circ$ is the vertical `up' position.
To make the problem amenable for analytical posterior and log marginal likelihood computations, we model the nonlinear dynamics using linear regression with random Fourier features (RFF)~\citep{rahimi2007random}
\begin{equation}
    x_t = x_{t-1} + M \phi(x_{t-1}, a_t) + \varepsilon_t, \quad \varepsilon_t \sim \mathcal{N}(0, \sigma^2 I),
\end{equation}
where $x_t \in \mathbb{R}^{d_x}$ is the state vector, $a_t \in \mathbb{R}^{d_a}$ is the action vector, $\varepsilon_t \in \mathbb{R}^{d_x}$ is state noise and $\phi$ is our RFF function.
For simplicity, we assume a fixed noise variance of $\sigma^2=10^{-6}$.
This parameterization allows us to perform Bayesian linear regression over $M$ which is analytically tractable~\citep{gelman2013bayesian}.
Full details can be found in Appendix~\ref{apx:controls}.

\subsubsection{Episodic One-Shot}
In this setting our simulated Cartpole hypothetically moves between different planets---hence a change in gravity---while still being asked to swing the pole up. 
In an episode, gravity is fixed for the duration of 15 trials, where each trial resets the Cartpole to a random initial state, $x_0$.
Each trial produces a trajectory of states and actions of length $H$ that are batched into one unit of data, such that each episode contributes 15 experiences; thus the datum for trial $t$ is $\mathcal{D}_t = \{([x_j, a_j], [x_j - x_{j-1}]) \}_{j=1}^H$.

We compare \fwname{} to recursive Bayes in a one-shot manner: after the first trial of a new episode, \fwname{} computes a weight vector over all previously encountered trial data to inform a posterior for the duration of the episode.
Recursive Bayes is reset to the base prior at the beginning of a new episode. 
Both proceed to update their belief every trial in an episode.

We show in Figure~\ref{fig:episodic} 
results over 5 random seeds where the expected score for a ground truth model is shown as a reference.
The first time \fwname{} encounters a novel planet, it resets its prior to the base prior and starts learning from scratch, similar to recursive Bayes.
On subsequent visits however, \fwname{} is able to leverage its past experiences to quickly adapt and recover high levels of performance. 
As recursive Bayes starts from scratch, it will again need multiple trials to develop a competent model.
\begin{figure}[H]
\centering
\includegraphics[width=0.9\textwidth]{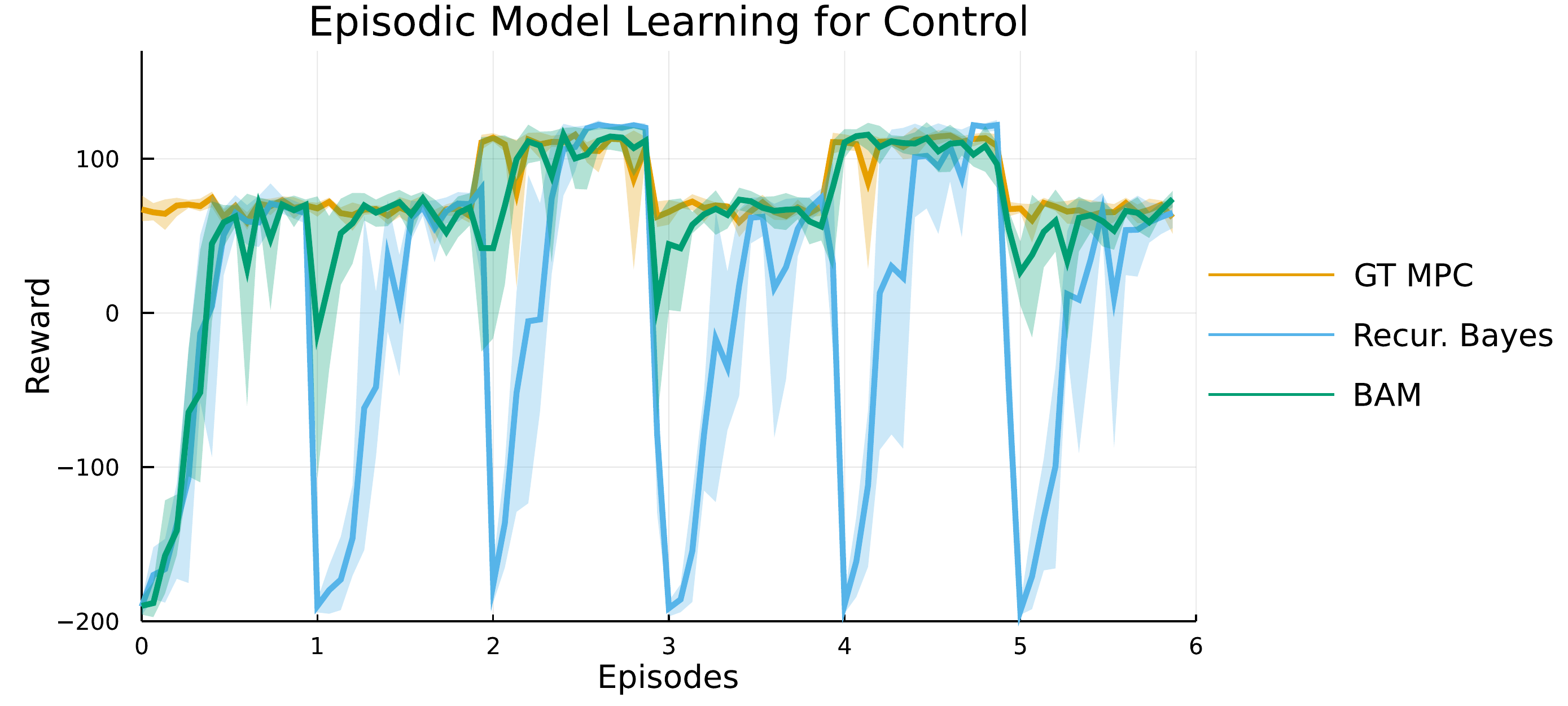}
\caption{We compare \fwname{} to recursive Bayes in an episodic model learning for control setting where the agent knows when the cartpole dynamics have changed.
For reference, the ground-truth scores (GT MPC) are also plotted for the different settings; all methods were averaged across 5 random seeds with 10-90\% quantiles shown. 
In this sequence, the values of gravity per episode are 9.81, 11.15, 3.72, 11.15, 3.72, 9.81.}
\label{fig:episodic}
\end{figure}

\subsubsection{Continual Learning}
In addition to the challenge of adapting to a different environment, we also test \fwname{} when the agent is not informed of the change, such that adaption must happen continually. 
In this scenario without explicit episodes, the gravity of the world can change after a trial, unbeknownst to the agent.
Similar to the previous setting, a datum for trial $t$ is $\mathcal{D}_t = \{([x_j, a_j], [x_j - x_{j-1}]) \}_{j=1}^H$.

While it is straightforward to run \fwname{} in this setting, we also investigate combining \fwname{} with BOCD, which we denote as \fwname{} + BOCD.
In BOCD, the detection of a changepoint causes the posterior distribution to be reset to the base prior.
In \fwname{} + BOCD, the detection of a changepoint is used as signal for when the agent should adapt its prior by computing $W_t$, to obtain $p(\theta_t \vert W_t, \buffer)$; this avoids rerunning the optimization procedure after each trial.

We show in Figure \ref{fig:continual} that while BOCD works as intended, without \fwname{} the Cartpole has to relearn a model from the prior belief, leading to significant dips in the expected reward.
While all methods are able to adapt when the environment is in a constant state, the use of past information allows \fwname{} and \fwname{} + BOCD to quickly adapt.
We can see that \fwname{} and \fwname{} + BOCD perform very similarly to each other, suggesting that we can bypass unnecessary computation.
\begin{figure}[H]
\centering
\includegraphics[width=0.9\textwidth]{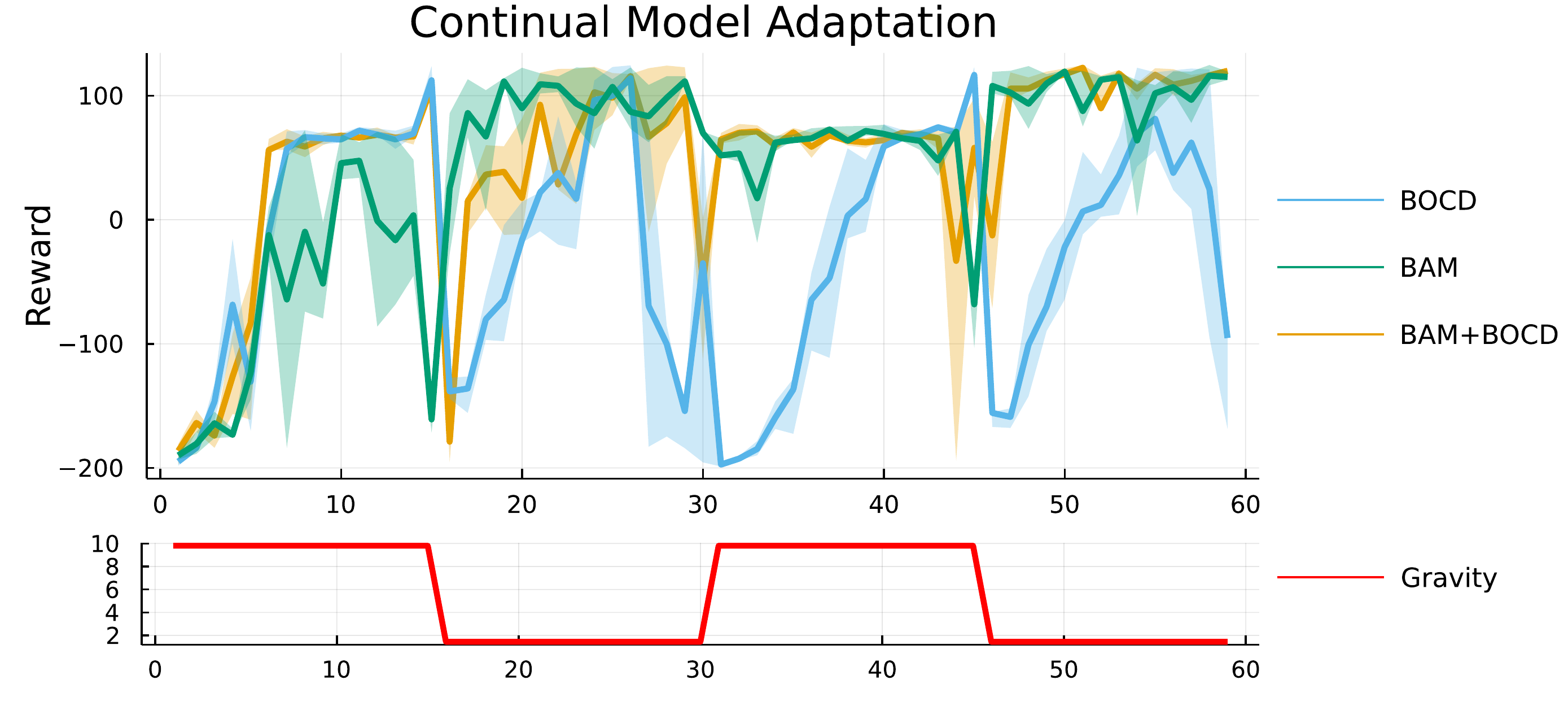}
\caption{The bottom plot shows the gravity value changing over time; the performance of the learned model in the swing-up task is shown above for 5 random seeds with 10-90\% quantiles. 
Initially all methods perform similarly until \fwname{} encounters an environment it has experienced before, in which it adapts more quickly than BOCD. 
The initial similarity is due to both methods detecting similar changepoints and reverting to un-informed priors. Naive BAM performs well, albeit with weight optimization after each trail versus only during detected changepoints for the other methods (60 optimizations versus $\sim$5).}
\label{fig:continual}
\end{figure}

\subsection{Experiment 3: Non-stationary Bandit}
\begin{figure}[H]
\centering
\includegraphics[width=0.9\linewidth]{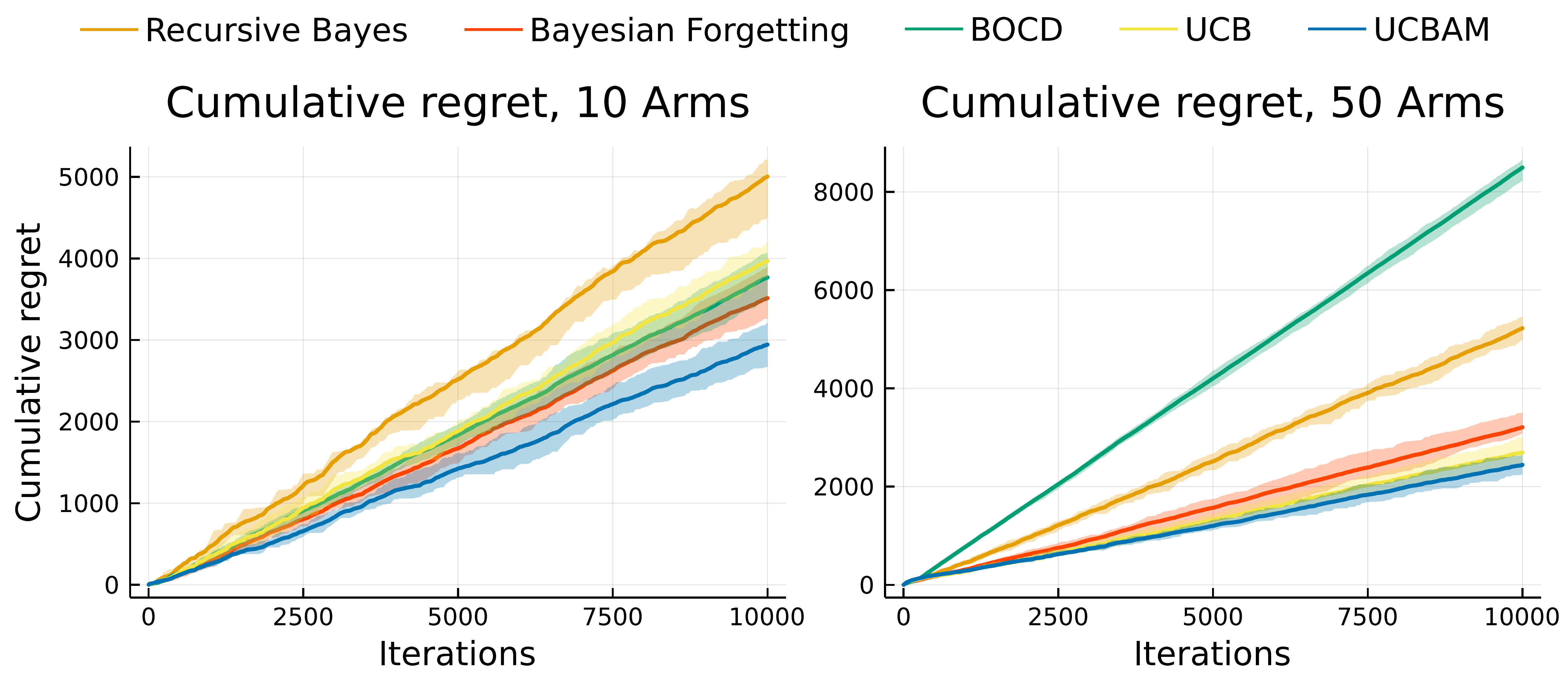}
\caption{
Cumulative regret over time between Thompson sampling (recursive Bayes), Bayesian exponential forgetting, BOCD, UCB, and UCBAM for different numbers of non-stationary arms, where results are averaged over 5 random arm configurations, where for each arm configuration results were collected over 5 different random seeds.
Shaded bars represent 25-75\% quantiles.
}
\label{fig:bandits}
\end{figure}
A common environment for testing the performance of online learning algorithms is the bandits setting~\citep{sutton2018reinforcement}.
We study a non-stationary version of the bandits setting where each arm switches between two values asynchronously, such that the best arm could be, at any point in time, a previously low value arm.
Gaussian noise with $\sigma = 0.25$ is additionally added to the current arm value. Sample arm values can be found in Figure \ref{fig:armvalues}.

For stationary bandits, a popular algorithm is Thompson sampling~\citep{thompson1933likelihood}
in which a posterior over each arm is continually updated via recursive Bayes.
These posteriors are then leveraged to decide which arm the agent should pull, where the posterior uncertainty allows the agent to automatically switch between exploration and exploitation.
In the non-stationary setting, we would expect vanilla Thompson sampling to fail as the arm posteriors would continue becoming more certain, as is evident from section~\ref{sec:non_stationary_experiment}.
While there are many approaches for how to adapt BAM to perform well in the non-stationary bandits setting, we take a simple approach and combine BAM with the upper confidence bound (UCB) bandit algorithm~\citep{agrawal1995sample}, which we call UCBAM; in the interest of space, we provide an algorithm table in Appendix~\ref{appx:ucbam}.
We compare UCBAM against UCB, Thompson sampling, Bayesian exponential forgetting + Thompson sampling and a BOCD + Thompson sampling scheme proposed by~\cite{mellor2013thompson}; hyperparameter values can be found in Appendix~\ref{appx:bandits}.
From Figure~\ref{fig:bandits}, we see that UCBAM outperforms the other methods for both 10 and 50 arms. 
Thompson sampling fails to capture the true current values of the arms and suffers a large penalty while exploration afforded by UCB enables better performance. 
BOCD achieves low regret in the 10 arm setting, but reverts to its prior too often to perform well with 50 arms.

\subsection{Experiment 4: Domain Adaptation with Rotated MNIST}
In the image classification setting, we often want to operate across a variety of domains. 
Traditional approaches include learning a single high capacity model or encoding assumptions about the domain structure into the system~\citep{NIPS2015_33ceb07b,Worrall2017HarmonicND}. Instead, we use a simple multivariate linear regression model where the targets are one-hot encoded labels, taking the highest output as the selected class. 
We consider a setting where the distribution of domains is known and is the same at both train and test time and evaluate BAM's ability to classify given a small number of labeled examples from the domains to adapt its belief. 
To achieve this, we create a rotated MNIST dataset.
32 domains were created, where each domain was comprised of 1,875 randomly sampled without replacement from the training set.
In a domain, the images are rotated by an angle sampled uniformly at random from 0 to $\pi$.
Each domain is treated as one batch of data in the memory buffer, i.e. $\mathcal{D}_i = \{(x_j^i, y_j^i) \}_{j=1}^{1875}$. 
We split and rotate the test set similarly into 8 domains and give 10 labeled examples from each to find readout weights over the training data.
We calculate the average accuracy over all test domains and collect results over 10 random seeds.
While OLS trained over all domains get a mean and standard deviation accuracy of 55\% $\pm$ 3.7\% accuracy, BAM is able to achieve a test set accuracy of \textbf{71.8\% $\pm$ 5.2\%}, showing that \fwname{} is able to leverage previous experiences to adapt to novel domains.

\section{Conclusion and future work}\label{sec:conclusion}
In this work we present \fwname{}, a
flexible Bayesian framework that allows agents to adapt to non-stationary environments.
Our key contribution is the addition of a memory buffer to the Bayesian framework, which allows the agent to adaptively
change its prior by choosing which past experiences to remember and which to forget.
Empirically, we show the proposed approach is general enough to be deployed in a variety of problem settings such as online inference, control, non-stationary bandits and domain adaptation.
To ensure that we isolated the benefits of BAM, the experiments focused on conjugate-prior distributions as it allowed us to compute the prior/posterior and the log-marginal likelihood in closed form.
Future work will focus on leveraging advances in streaming variational inference~\citep{broderick2013streamingbayes,kurle2019continual} to allow \fwname{} to be deployed on more complicated models, i.e. Bayesian deep neural networks.
For simplicity, we focused on binary values for the readout weights as it allowed for a simple greedy discrete optimization algorithm to be used.
We imagine that allowing the weights to be any value between 0 and 1 will increase performance in certain settings and allow \fwname{} to have a much larger repertoire of priors that it can construct, as well as suggest different optimization algorithms to use within the framework.
Finally, efficient memory buffer schemes will be explored to avoid the 'infinite memory' problem of continual learning, enabling \fwname{} to operate with efficiency indefinitely.

\section{Acknowledgments}
The authors thank Ayesha Vermani, Matthew Dowling and Il Memming Park for insightful discussions and feedback.

\bibliography{ref}
\bibliographystyle{iclr2022_conference}
\appendix
\section{Examples of posteriors with decreasing variance}\label{apx:decreasing_posteriors}
In this section we will provide two cases where the variance of the posterior is non-increasing with probability 1 as more data is collected, regardless of the observed data.
For simplicity we stick to only 1D, though we are confident these results extend to their multi-dimensional extensions.
\subsection{Bayesian estimation of mean of normal distribution}
The likelihood is of the form
\begin{equation}
    p(y \vert \theta) = \mathcal{N}(\theta, \sigma^2),
\end{equation}
where $\sigma^2 > 0$ is known.
We use a normal prior
\begin{equation}
    p(\theta) = \mathcal{N}(\bar{\theta}_0, \tau_0),
\end{equation}
where $\tau_0 > 0$.
Given arbitrary data $y_1, \cdots, y_N \sim p(y_{1:N})$ we get that the posterior is of the form
\begin{equation}
    p(\theta \vert y_{1:N}) = \mathcal{N}(\bar{\theta}_N, \tau_N),
\end{equation}
where
\begin{align}
\label{eq:appdx_posterior_variance}
    & \tau_N = (\tau_0^{-1} + N \sigma^{-2})^{-1} =
    \frac{\sigma^2 \tau_0}{\sigma^2 + N \tau_0}, \\
    & \bar{\theta}_N = \tau_N \left(\tau_0^{-1} \bar{\theta}_0 + \sigma^{-2} \sum_{n=1}^N y_n\right).
\end{align}
We observe that the posterior variance, \eqref{eq:appdx_posterior_variance}, is not a function of the observed data. In fact, the posterior variance is deterministic given $N$, $\tau_0$ and $\sigma^2$. In this particular setting, we can show that $\tau_N$ is a \textit{strictly decreasing} function of $N$. 

To prove that $\tau_0 > \tau_1 > \cdots > \tau_n > \cdots > \tau_N$, it suffices to show that 
\begin{equation}
\tau_{n - 1} > \tau_n, \quad \forall n \in \{1, \cdots N \},
\end{equation}
 which is equivalent to showing that
 \begin{equation}
 \frac{\tau_n}{\tau_{n-1}} < 1 , \quad \forall n \in \{1, \cdots N \}.
 \end{equation}

Before proceeding, we note that as Bayes' theorem is closed under recursion, we can always express the posterior variance as
\begin{equation}
    \tau_n = (\tau_{n-1} + \sigma^{-2})^{-1} = \frac{\sigma^2 \tau_{n-1}}{\sigma^2 + \tau_{n-1}}.
\end{equation}
Computing $\tau_n / \tau_{n-1}$
\begin{align}
    \frac{\tau_n}{\tau_{n-1}} & = \frac{\sigma^2 \tau_{n-1}}{\sigma^2 + \tau_{n-1}} \times \frac{1}{\tau_{n-1}}, \\
    & = \frac{\sigma^2}{\sigma^2 + \tau_{n-1}}.
\end{align}
Because
\begin{equation}
    \tau_n > 0, \quad  \forall n \in \{ 0, \cdots, N \},
\end{equation}
we have that $\sigma^2 < \sigma^2 + \tau_{n-1}$, and conclude that $\tau_n / \tau_{n-1} < 1$.

\subsection{Bayesian linear regression}
Next, we consider the setting of Bayesian linear regression with known variance. 
The likelihood is of the form
\begin{equation}
    p(y_i \vert x_i, \theta) = \mathcal{N}(\theta x_i, \sigma^2), \quad x_i \in \mathbb{R},
\end{equation}
where $\sigma^2 > 0$ is known.
We use a normal prior
\begin{equation}
    p(\theta) = \mathcal{N}(\bar{\theta}_0, \tau_0),
\end{equation}
where $\tau_0 > 0$.
Given arbitrary observations $(x_1, y_1), \ldots, (x_n, y_n)$, we have that the posterior is of the form
\begin{equation}
    p(\theta \vert x_{1:N}, y_{1:N}) = \mathcal{N}(\bar{\theta}_N, \tau_N),
\end{equation}
where
\begin{align}
\label{eq:appx_blr_variance}
    & \tau_N = \left( \tau_0^{-1} + \sigma^{-2} \sum_{n=1}^N x_n^2 \right)^{-1} = \frac{\sigma^2 \tau_0}{\sigma^2 +  \tau_0 \sum_{n=1}^N x_n^2}, \\
    & \bar{\theta}_N = \tau_N (\tau_0^{-1} \bar{\theta}_0 + \sigma^{-2} \sum_{n=1}^N x_n y_n).
\end{align}
To prove that $\tau_0 \geq \tau_1 \geq \cdots \geq \tau_n \geq \cdots \geq \tau_N$, it suffices to show that
\begin{equation}
    \frac{\tau_n}{\tau_{n-1}} \leq 1, \quad \forall x_n \in \mathbb{R}, \, \forall n \in \{1, \cdots, N \}.
\end{equation}
Again, due to the Bayes being closed under recursion, we can always rewrite the posterior variance as
\begin{equation}
    \tau_n = \left( \tau_{n-1}^{-1} + \sigma^{-2} x_n^2 \right)^{-1} = \frac{\sigma^2\tau_{n-1} }{\sigma^2 + \tau_{n-1} x_n^2}.
\end{equation}
So
\begin{align}
    \frac{\tau_n}{\tau_{n-1}} &= \frac{\sigma^2 \tau_{n-1} }{\sigma^2 + \tau_{n-1} x_n^2} \times \frac{1}{\tau_{n-1}}, \\
    & = \frac{\sigma^2}{\sigma^2 + \tau_{n-1} x_n^2}.
\end{align}
As $x_n^2 \geq 0$, we have that $\tau_n / \tau_{n-1} \leq 1$, which completes the proof.

\section{Proof of Proposition~\ref{prop:bam_larger_variance}}\label{proof:bam_larger_variance}
For clarity, we rewrite the proposition below
\begin{prop*}
Let
\begin{equation}
    p(\theta \vert \buffer, W_t) \propto p(\theta) \prod_{j=1}^{t-1} p(\mathcal{D}_j \vert \theta)^{w_{t, j}}, \quad w_{t, j} \in \{0, 1\},
\end{equation}
be the prior used in \fwname{} and let
\begin{equation}
    p(\theta \vert \buffer) \propto p(\theta) \prod_{j=1}^{t-1} p(\mathcal{D}_j \vert \theta),
\end{equation}
be the recursive Bayes prior.
Then
\begin{equation}
    \mathbb{E} \left[ \textrm{Var}(\theta \vert \buffer, W_t) \big\vert W_t \right] \geq \mathbb{E}[\textrm{Var}(\theta \vert \buffer)], \quad \forall W_t \in \{0, 1\}^{t-1}.
\end{equation}
\end{prop*}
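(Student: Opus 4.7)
The plan is to reduce the claim to a single application of the law of total variance, exactly the identity already used in~\eqref{eq:law_total_variance}. The key observation is that when $w_{t,j} \in \{0,1\}$, the BAM prior is literally an ordinary Bayesian posterior conditioned on a \emph{subset} of the buffer. Specifically, let $S = \{j : w_{t,j} = 1\}$ and $\bar S = \{j : w_{t,j} = 0\}$, and write $\mathcal{D}_S = \{\mathcal{D}_j : j \in S\}$ and $\mathcal{D}_{\bar S} = \{\mathcal{D}_j : j \in \bar S\}$. Then
$$p(\theta \vert \buffer, W_t) \propto p(\theta) \prod_{j \in S} p(\mathcal{D}_j \vert \theta) = p(\theta \vert \mathcal{D}_S),$$
while the recursive Bayes prior satisfies $p(\theta \vert \buffer) = p(\theta \vert \mathcal{D}_S, \mathcal{D}_{\bar S})$. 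So the claim is equivalent to: conditioning on the \emph{additional} batches $\mathcal{D}_{\bar S}$ cannot increase the expected posterior variance.

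Given this reduction, I would first apply the law of total variance conditionally on $\mathcal{D}_S$,
$$\textrm{Var}(\theta \vert \mathcal{D}_S) = \mathbb{E}[\textrm{Var}(\theta \vert \mathcal{D}_S, \mathcal{D}_{\bar S}) \vert \mathcal{D}_S] + \textrm{Var}(\mathbb{E}[\theta \vert \mathcal{D}_S, \mathcal{D}_{\bar S}] \vert \mathcal{D}_S),$$
and drop the non-negative second term to obtain $\textrm{Var}(\theta \vert \mathcal{D}_S) \geq \mathbb{E}[\textrm{Var}(\theta \vert \mathcal{D}_S, \mathcal{D}_{\bar S}) \vert \mathcal{D}_S]$. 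Taking an outer expectation over $\mathcal{D}_S$ and using the tower property then yields
$$\mathbb{E}_{\mathcal{D}_S}[\textrm{Var}(\theta \vert \mathcal{D}_S)] \geq \mathbb{E}_{\mathcal{D}_S, \mathcal{D}_{\bar S}}[\textrm{Var}(\theta \vert \mathcal{D}_S, \mathcal{D}_{\bar S})].$$
Since $W_t$ determines the partition $(S, \bar S)$, the LHS is precisely $\mathbb{E}[\textrm{Var}(\theta \vert \buffer, W_t) \vert W_t]$ and the RHS is $\mathbb{E}[\textrm{Var}(\theta \vert \buffer)]$, delivering the proposition.

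There is essentially no obstacle here: the binary restriction on $W_t$ is what makes the BAM prior coincide with an ordinary posterior over a sub-buffer, and the rest is exactly the identity already invoked in~\eqref{eq:law_total_variance}. The only care needed is bookkeeping, namely making explicit that once $W_t$ is fixed, the outer $\mathbb{E}[\,\cdot\,\vert W_t]$ on the statement's LHS is just the expectation over the random batches comprising $\mathcal{D}_S$, and noting that equality holds in the boundary case $W_t = (1, \ldots, 1)$ where the two priors coincide.
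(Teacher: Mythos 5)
Your proposal is correct and is essentially identical to the paper's own proof: both reduce the binary-weight BAM prior to an ordinary posterior on the remembered sub-buffer $\mathcal{D}_S$, apply the law of total variance conditionally on $\mathcal{D}_S$, and take an outer expectation. The paper merely spells out the two boundary cases ($W_t = \mathbf{1}$ and $W_t = \mathbf{0}$) separately before giving the same general argument, which your treatment already subsumes.
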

\begin{proof}
We begin by describing some simple cases, before presenting the proof for the general case.

\textbf{Case 1: All the readout weights are 1.} \\
If all the readout weights are 1, i.e. $W_t = \mathbf{1}$ then
\begin{equation}
    p(\theta \vert \buffer, W_t=\mathbf{1}) = p(\theta \vert \buffer),
\end{equation}
recovering the recursive Bayes prior.
Thus
\begin{equation}
    \mathbb{E} \left[ \textrm{Var}(\theta \vert \buffer, W_t=\mathbf{1}) \big\vert W_t=\mathbf{1} \right] = \mathbb{E}[\textrm{Var}(\theta \vert \buffer)].
\end{equation}

\textbf{Case 2: All the readout weights are 0.}\\
If all the readout weights are 0, i.e. $W_t = \mathbf{0}$ then
\begin{equation}
    p(\theta \vert \buffer, W_t=\mathbf{0}) = p(\theta),
\end{equation}
recovering the base prior.
The law of total variance states
\begin{equation}
    \textrm{Var}(\theta) = \mathbb{E}\left[ \textrm{Var}(\theta \vert \buffer) \right] + \textrm{Var}(\mathbb{E}[\theta \vert \buffer]).
\end{equation}
As both terms on the right-hand side are positive, this implies that
\begin{equation}
   \mathbb{E} \left[ \textrm{Var}(\theta \vert \buffer, W_t = \mathbf{0}) \big\vert W_t=\mathbf{0} \right] =  \textrm{Var}(\theta) \geq \mathbb{E}\left[ \textrm{Var}(\theta \vert \buffer) \right].
\end{equation}

\textbf{Case 3: General case}\\
Let $\textbf{r}$ be the indices of the readout weight set to 1 (``remembered'') and $\textbf{f}$ be the indices of the readout weights set to 0 (``forgotten'').
We can express the memory buffer as $\buffer = \mathcal{D}_{\textbf{r}} \cup  \mathcal{D}_{\textbf{f}}$ where $\mathcal{D}_{\textbf{r}}$ are the data points selected by the readout weights and $\mathcal{D}_{\textbf{f}}$ are the data points that are ignored.
We can rewrite the BAM prior as
\begin{equation}
    p(\theta \vert \buffer, W_t) = p(\theta \vert \mathcal{D}_{\textbf{r}}),
\end{equation}
which is equivalent to applying Bayes theorem using $\mathcal{D}_{\textbf{r}}$.
Similarly, we can rewrite the recursive Bayes prior as
\begin{equation}
    p(\theta \vert \buffer) = p(\theta \vert \mathcal{D}_{\textbf{r}}, \mathcal{D}_{\textbf{f}}) \propto p(\mathcal{D}_{\textbf{f}} \vert \theta) p(\theta \vert \mathcal\mathcal{D}_{\textbf{r}}).
\end{equation}
Using the law of total variance, we get
\begin{equation}
    \textrm{Var}(\theta \vert \buffer, W) = \textrm{Var}(\theta \vert \mathcal{D}_{\textbf{r}}) = \mathbb{E}\left[ \textrm{Var}(\theta \vert \buffer) \big\vert \mathcal{D}_{\textbf{r}} \right] + \textrm{Var} \left( \mathbb{E}[\theta \vert \buffer] \big\vert \mathcal{D}_{\textbf{r}} \right),
\end{equation}
where again, the above implies
\begin{equation}
    \textrm{Var}(\theta \vert \mathcal{D}_{\textbf{r}}) \geq  \mathbb{E}\left[ \textrm{Var}(\theta \vert \buffer) \big\vert \mathcal{D}_{\textbf{r}} \right] .
\end{equation}
As the above inequality holds for all values of $\mathcal{D}_{\textbf{r}}$, it also holds under expectation as well
\begin{equation}
    \mathbb{E}[\textrm{Var}(\theta \vert \mathcal{D}_{\textbf{r}}) \big\vert W_t] \geq  \mathbb{E}\left[ \textrm{Var}(\theta \vert \buffer) \big\vert W_t \right] .
\end{equation}
Since $\textrm{Var}(\theta \vert \buffer)$ is the variance under the recursive Bayes model, it is not a function of $W_t$, allowing the conditioning on $W_t$ to be dropped
\begin{equation}
    \mathbb{E}\left[ \textrm{Var}(\theta \vert \buffer) \big\vert W_t \right] = \mathbb{E}\left[ \textrm{Var}(\theta \vert \buffer) \right].
\end{equation}
Applying our definition of $\mathcal{D}_{\textbf{r}}$ recovers the desired result:
\begin{equation}
    \mathbb{E}[\textrm{Var}(\theta \vert \buffer, W_t) \big\vert W_t] \geq  \mathbb{E}\left[ \textrm{Var}(\theta \vert \buffer) \right].
\end{equation}
\end{proof}

\section{Discussion of greedy discrete optimization}\label{appx:greedy_discrete}
As the number of choices is $2^{(t-1)}$, it is impractical to use brute force methods for solving the discrete optimization problem defined in~\eqref{eq:alternative_loss_with_regularizer}.
For simplicity, we use two types of greedy approaches for discrete optimization. 
In both cases, each element in memory is evaluated against a target datum with the inner term of \eqref{eq:readout_optimization}, the log marginal likelihood and regularization term.
The first is a bottom-up approach, where we start with all readout weights set to 0 and greedily add the most beneficial associated datum until the combined score decreases.
Pseudo code is displayed in Algorithm~\ref{alg:bottomsup}.
Note that this is  similar in spirit to the stepwise selection approach used for selecting variables in linear regression~\citep{hocking1976biometrics}.
\begin{algorithm}[H] \label{alg:bottomsup}
\scriptsize
\SetAlgoLined
 \KwData{memory $\buffer$, target $\mathcal{D}_t$, prior $p$, regularizer strength $\lambda$ }
 \textit{priorscore} $\gets \log p(\mathcal{D}_t)$ \;
 \For{size($\buffer$)}{
     \For{each $\mathcal{D}_i$ in $\buffer$}{
         \eIf{W[i] > 0}{
          scores[i] $\gets \log \int p(\mathcal{D}_t \vert \theta_t) p(\theta_t \vert W, \buffer) d\theta_t + \log p(W \vert \buffer)$
          \
          }{ scores[i] = -Inf }
      }
      \textit{score}, \textit{idx} = findmax(scores) \;
      \eIf{\textit{score} > \textit{priorscore}}{
      $W[idx] \gets 1 $ \;
      $priorscore \leftarrow score$ \;
      $p$ = posterior($p$, $\buffer[idx]$)
      }{return $W$}
  }
\KwResult{Readout weights $W$}
\caption{Bottom-Up Greedy for \fwname{}}
\end{algorithm}

In the second approach, the readout weight starts at 0.
The contribution of each datum in $\buffer$ is evaluated independently (and can be done practically in parallel with either multi-core CPUs or GPUs). 
These scores are filtered to only be scores better than the base prior's likelihood. 
The top $q$\% percentile of remaining scores are chosen and their corresponding readout weight value are set to 1.
Pseudo code is displayed in Algorithm~\ref{alg:greedy}.
This approach is faster than bottom-up as only one round of optimization is needed but the combination of each of the individual experiences could potentially lead to sub-optimal performance. 
Additionally, the percentile cutoff may needlessly include or exclude weight values.

In practice, we found that the two approaches performed similarly with the main exception being the MNIST experiment, where the parallel approach was significantly worse than bottom-up.
\begin{algorithm}[H] 
\label{alg:greedy}
\scriptsize
\SetAlgoLined
 \KwData{memory $\buffer$, target $\mathcal{D}_t$, regularizer strength $\lambda$, 
  prior distribution $p$, cutoff $q$ }
 \textit{priorscore} = $\log p(\mathcal{D}_t)$ \;
 \For{each $\mathcal{D}_i$ in $\buffer$}{
  scores[i] $\gets \log \int p(\mathcal{D}_t \vert \theta_t) p(\theta_t \vert \mathcal{D}_i) d\theta_t + \log p(W \vert \mathcal{D}_i)$
  }
  cutoff = quantile(\textit{scores} > \textit{priorscore}, $q$) \;
  \For{each in scores}{
  \eIf{scores[i] > cutoff}
  {$W$[i] $\gets$ 1}
  {$W$[i] $\gets$ 0}
  }
\KwResult{Readout weights $W$}
 \caption{Parallel selection for \fwname{}}
\end{algorithm}

\section{Experimental Settings}
\subsection{Controls}\label{apx:controls}
For our controls experiments, we used Model Predictive Path Integral control~\citep{williams2017information}, a model predictive control (MPC) algorithm with a planning horizon of 50 timesteps and 32 sample trajectories.
Our sampling covariance was 0.4 for each controlled joint--in the case of Cartpole, the action space is 1. 
The temperature parameter 
we used was 0.5.

Planning with a probabilistic model involves each sampling trajectory to use a different model sampled from the current belief (as opposed to a sampled model per timestep); planning rollouts included noise, such that
\begin{equation}
x_t = x_{t-1} + M'\phi(x_{t-1}, a_t) + \varepsilon_t, \quad \varepsilon_t \sim \mathcal{N}(0, \sigma^2I),
\end{equation}
where $M'$ is sampled from the current belief.
$\phi$ is the random Fourier features function from~\citep{rahimi2007random} where we use 200 features with a bandwidth calculated as the mean pairwise distance of the inputs (states and actions) which is 6.0.
To learn $M$, we use Bayesian linear regression where each row of $M$ is modeled as being independent.
We place a multivariate Normal prior on each of the rows with a prior mean of all 0s and prior precision of $10^{-4} I$.

The Cartpole model's initial state distribution for positions and velocities
were sampled uniformly
from -0.05 to 0.05, with the angle of the cart being $\pi$ such that it points down. 
This sets up the swing-up problem.

For the episodic one-shot experiment, we perform MPC for 200 timesteps as one trial. 15 trials make one episode, with the dynamical properties of the environment (i.e. gravity) fixed for the duration of the trial. We vary the gravity parameter of the model by selecting gravity values from celestial bodies of the Solar System; we used Earth, Mars, and Neptune at 9.81, 3.72, and 11.15 $m/s^2$, respectively. At the start of a new episode, each method's beliefs are reset to the base prior, and each method proceeds to update their respective beliefs accordingly. \fwname{} retains each trail's datum in memory across episodes.

For the continual learning experiment, we do not inform our agent that the model dynamics have changed, i.e. we never reset the agent's belief to a prior. Instead, we use Bayesian Online Changepoint Detection (BOCD) to discern if the underlying model distribution has changed. BOCD is compared against \fwname{}, both with and without changepoint detection; while BOCD resets to a prior when a change is detected, \fwname{} optimizes for a weight vector over the previously experienced data. The BOCD switching parameter $\lambda$ for its hazard function was set to 0.11. The agent attempts the task for 60 trials, with the environment experiencing changes 3 times during said trials.

\subsection{Domain Adaptation with Rotated MNIST}\label{apx:mnist}
We ran 10 independent Bayesian linear regressions, one for each dimension of the one-hot encoded target.
As the prior, we use a multivariate Normal distribution with a prior mean of all 0s and prior precision of $0.1 I$.
Similar to the controls experiment, we assume the additive noise is fixed and set to $\sigma^2=10^{-4}$.
As regularization had little effect, we set $\lambda=0$.

\subsection{Non-stationary Bandits}\label{appx:bandits}
For both UCB and UCBAM, we use a confidence-level function of $f(t) = 1 + t\log^2(t)$.
The timescale parameter for BOCD + Thompson sampling is 0.016, which is the expected frequency of the arm switches.
The weighting term for Bayesian exponential forgetting + Thompson sampling is 0.8.
\begin{figure}[h]
\includegraphics[width=0.7\linewidth]{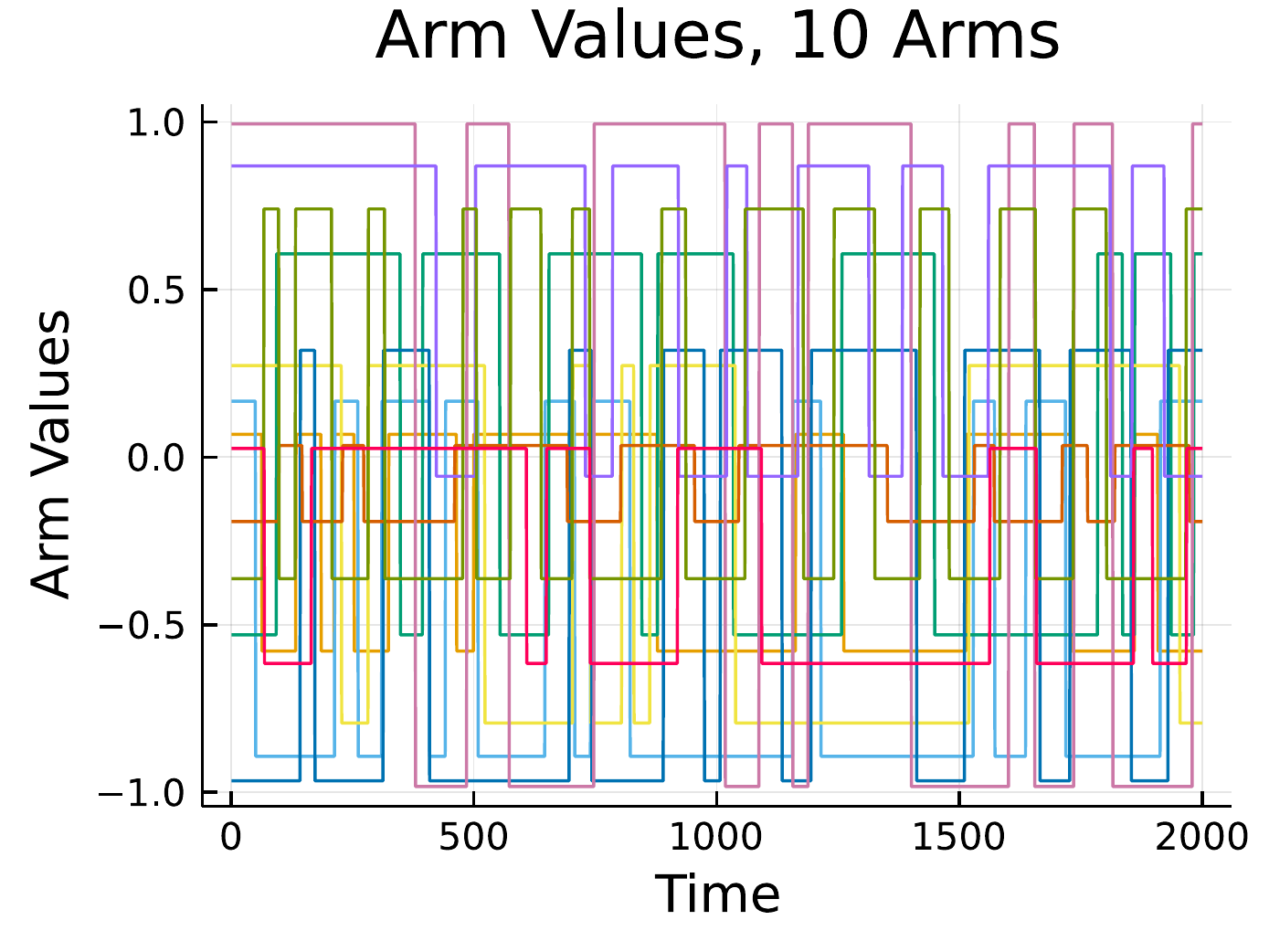}
\centering
\caption{
For reference, the arm values of the bandit experiments without added noise ($\sigma=0.25$), with only 2000 time steps for clarity. 
Each arm switches between a high and low value at variable times, such that the highest value arm may be a previously low value arm. 
The simplest strategy would be to find the highest mean value arm, which is what UCB does in this case. 
Recursive Bayes attempts the same, but does not explore sufficiently to achieve an accurate estimate of the mean arm values.
}
\label{fig:armvalues}
\end{figure}

\subsubsection{Description of UCBAM}\label{appx:ucbam}
The challenge of bandit settings is the need to explore, especially in the non-stationary setting we devised. 
As such, UCB is a well known algorithm for leveraging the uncertainty in the arm values to enable exploration. We combine this frequentist method with \fwname{} as follows. 
When we assume to `know' the current best arm value, we exploit it and keep a belief over its distribution with \fwname{}. 
The signal for whether the best arm is `known' is if the likelihood of the current arm's value is higher with our current arm belief or higher with the naive base prior. 
If the base prior produces a higher likelihood, we assume the current arm distribution is incorrect (and will be updated with \fwname{}), and we default to the UCB metric for arm selection. This simple combination of methods in this setting allows for the exploration benefits of UCB with the quick recognition of high value arms due to \fwname{} and subsequent exploitation.

\SetKwComment{Comment}{\# }{}
\begin{center}
\begin{minipage}{.8\linewidth}
\begin{algorithm}[H]
\label{alg:bandits}
\SetAlgoLined
 \KwData{prior distribution $p$}
 $K \gets$ number of arms \;
 b = copy($p$), empty $\mathcal{D}$, K times \Comment*[r]{belief and memory per arm}
 \textit{known} $\gets$ false \;
 \For{each iteration}{
    \eIf{known}{
      arm $\gets$ thompson($b_{1...K}$)
      }{
      arm $\gets$ UCB choice
      }
  \textit{v} $\gets$ pull(arm) \;
  \eIf{$log(p(v)) \geq log(b_{arm}(v))$}{ 
  \textit{known} $\gets$ false
  }{
  \textit{known} $\gets$ true
  }
  $b_{arm}$ = BAM($p, \buffer^{arm}, v$) \Comment*[r]{\fwname{} posterior update}
  $\buffer = [\buffer, v]$ \Comment*[r]{add value to memory}
  }
 \caption{UCBAM}
\end{algorithm}
\end{minipage}
\end{center}

\end{document}